\newtheorem{theorem}{Theorem}
\theoremstyle{definition}
\newtheorem{definition}{Definition}
\begin{document}

\title{FInC Flow: Fast and Invertible $k \times k$ Convolutions for Normalizing Flows}


\author{\authorname{Aditya Kallappa\sup{1}, Sandeep Nagar\sup{1} and Girish Varma\sup{1}}
\affiliation{\sup{1}International Institute of Information Technology, Hyderabad, India}
\email{\{aditya.kallappa, sandeep.nagar\}@research.iiit.ac.in, girish.varma@iiit.ac.in}}

\keywords{Normalizing Flows, Deep Learning, Invertible Convolutions}

\abstract{Invertible convolutions have been an essential element for building expressive normalizing flow-based generative models since their introduction in Glow. Several attempts have been made to design invertible $k \times k$ convolutions that are efficient in training and sampling passes. Though these attempts have improved the expressivity and sampling efficiency, they severely lagged behind Glow which used only $1 \times 1$ convolutions in terms of sampling time. Also, many of the approaches mask a large number of parameters of the underlying convolution, resulting in lower expressivity on a fixed run-time budget. We propose a $k \times k$ convolutional layer and Deep Normalizing Flow architecture which i.)  has a fast parallel inversion algorithm with running time O$(n k^2)$ ($n$ is height and width of the input image and k is kernel size), ii.) masks the minimal amount of learnable parameters in a layer. iii.) gives better forward pass and sampling times comparable to other $k \times k$ convolution-based models on real-world benchmarks. We provide an implementation of the proposed parallel algorithm for sampling using our invertible convolutions on GPUs. Benchmarks on CIFAR-10, ImageNet, and CelebA datasets show comparable performance to previous works regarding bits per dimension while significantly improving the sampling time.}

\onecolumn \maketitle \normalsize \setcounter{footnote}{0} \vfill

\section{Introduction}


Normalizing flow is an important subclass of Deep Generative Models that offers distinctive benefits \citep{kobyzev2020normalizing}. In comparison to GANs \citep{gans} and VAEs \citep{kingma2018variational}, they are trained using a very intuitive Maximum Likelihood loss function. Images and the \emph{latent vector}, which is required to have a Gaussian distribution, correspond one-to-one in flow models. Despite these intriguing characteristics, GANs and VAEs are utilized more frequently. This is due to the need for the Normalizing Flows transformations to be invertible, which significantly restricts the neural network types employed. For deployment in a real-world scenario, the invertible transformations must be efficiently calculable in the forward and sample stages.

A significant breakthrough came with Glow \citep{kingma2018glow} which used $1\times 1$ invertible convolutions to design normalizing flows. If it exists, the inverse function for a $1 \times 1$ convolution also happens to be a $1\times 1$ convolution. Since computing $1 \times 1$ convolution has fast parallel algorithms for which running time does not depend on the spatial dimensions, they are also highly efficient in forward pass (i.e. computing \emph{latent vector} from an image) as well as the sampling passes (i.e. computing image from a sampled \emph{latent vector}). Extending Glow to use invertible $k\times k$ convolutions promises to improve the expressivity further, allowing it to model more complex datasets. However, this is a challenging problem since the inverse function for a $k \times k$ convolution, in general, is given by a $n^{2} \times n^{2}$ matrix where $n = H = W$ (i.e. the spatial dimensions). Hence, while the forward pass can be fast, the trivial approach for the sampling pass will cost $O(n^4)$ operations per convolutional layer.

\begin{figure*}[!th]
    \centering
    \includegraphics[width=\linewidth]{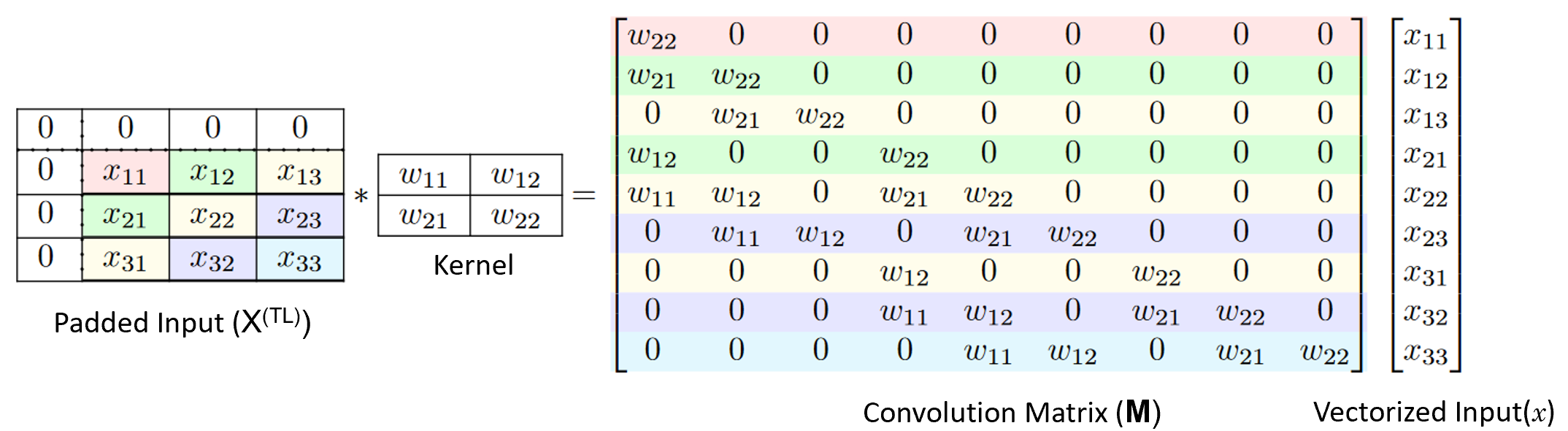}
    \caption{Convolution of a $3 \times 3$ \emph{TL} (Top Left) padded image with a $2 \times 2$ filter viewed as a linear transform of vectorized input($x$) by the convolution matrix $\textbf{M}$. The \emph{TL} padding on the input results in the making matrix $\textbf{M}$ lower triangular, and all diagonal values correspond to $w_{k,k}$ of the filter. Each row of $\textbf{M}$ can be used to find a pixel value. The rows or pixels with the same color can be inverted in parallel since all the other values required for computing them will already be available at a step of our inversion algorithm \ref{alg:inv_algo}.}
    \label{fig:Conv_equn}
\end{figure*}


CInC Flow \citep{nagar2021cinc} introduced a padded $3 \times 3$ convolution layer design and gave it the necessary and sufficient conditions to make it invertible. They showed that the convolution matrix is lower triangular by ensuring padding in only two sides of the input. Furthermore, all the diagonal entries of the convolution matrix are equal to a single weight parameter. By setting this parameter to 1, they ensured that the convolutions are invertible, and Jacobian is always 1.

We build on their work by proposing a parallel inversion algorithm for their convolution design. The parallel algorithm only uses  O$(n k^2)$ sequential operations, unlike O$(n^2 k^2)$ operations used by most previous works. We also build a normalizing flow architecture, where channel-wise splitting is further used to parallelize operations.

\paragraph{Our Contributions.}
\begin{enumerate}
    \item We design a $k\times k$ invertible convolutional layer with a fast and parallel invertible sampling algorithm (see Sections \ref{sec:conv-design}, \ref{subsec:PIA}).
    \item We build a normalizing flow architecture based on the fast invertible convolution, which uses channel wise splitting to improve the parallelism further (see Sections \ref{section:fastflowunit}, \ref{subsec:arch}).
    \item We provide a fast GPU implementation of our parallel inversion algorithm and benchmark the sampling times of the model (see Section \ref{sec:results}). We show greatly improved sampling times due to our parallel inversion algorithm, while giving similar bits per dimensions as compared to other works.
\end{enumerate}

\begin{table*}[!t]
    \centering
    \begin{tabular}{l  l  l  l  l }
        \toprule
        Method    & \# of ops & \# params / CNN layer & \ Complexity of Jacobian & Inverse\\ 
        \midrule
        FInC Flow (our)  &    $(2n-1)k^2$ & $k^2 - 1$    & 1 &     exact   \\
        Woodbury \citep{lu2020woodbury} & $cn^2$ & $k^2$ & O$(d^2(c + n) + d^3)$ & exact \\
        MaCow \citep{ma2019macow}     &  $4nk^2$   & $k( \lceil \frac{k}{2} \rceil -1$) &  O$(n^3)$ & exact\\ 
        Emerging \citep{hoogeboom2019emerging}  &  $2n^2k^2$ &  $k( \lceil \frac{k}{2} \rceil -1$)  &  O$(n)$  &    exact   \\
        CInC Flow \citep{nagar2021cinc}  &  $n^2 k^2$   & $k^2 - 1$    & 1  &   exact      \\
        \midrule
        MintNet   \citep{song2019mintnet} &  $3n$   &  $ \frac{k^2}{3}$   &  O$(n)$ &    approx  \\ 
        SNF  \citep{keller2021self} &   $k^2$   &  $k^2$  &  approx &  approx \\ \bottomrule
    \end{tabular}
    \caption{Comparison of the learnable parameters. where $ n \times n $ is input size, $k \times k$ is filter size which is constant, $c$ is number of input/output channels. $d$ is the number of latent dimensions. \# of ops: required number of operations for the inversion of convolutional layers. The complexity of Jacobian: Time complexity for calculating the Jacobian of a single convolution layer. For FInC Flow and CInC Flow, the Jacobian is 1, since the Convolution matrix is lower triangular with diagonal entries being 1.}
    \label{table:complexity}
\end{table*}

\section{Related Work}
\paragraph{Generative Modeling.}
The idea of generative modeling stems from training a generative model whose sample comes from the same distribution as the training data distribution. Most of the generative models can be grouped as Generative adversarial networks (GANs) \citep{goodfellow2014generative, brock2018large}, Energy-based models (EBMs) \citep{zhang2022generative, song2021maximum}, Variational autoencoders (VAEs) \citep{kingma2013auto, kingma2018variational, hazami2022efficient}, Autoregressive models \citep{oord2016wavenet, nash2020polygen}, Diffusion models \citep{ho2020denoising, song2021scorebased, song2019generative} and Flow-based models \citep{dinh2014nice, dinh2016density, hoogeboom2019emerging,kingma2018glow, ho2019flow++, ma2019macow, nagar2021cinc}. 

\paragraph{Normalizing Flows.}
Flows-based models construct complex distributions by transforming a probability density through a series of invertible mappings \citep{rezende2015variational}. At the end of these invertible mapping, we obtain a valid distribution; hence, this type of flow is referred to as a Normalizing Flow model. Flow models apply the rule for change of variables; the initial density ‘flows’ through the sequence of invertible mappings \citep{dinh2016density}. Flow-based models generalize a dataset distribution into a latent space \citep{kobyzev2020normalizing}.

\paragraph{Invertible kxk Convolutions.}
An invertible neural network requires the inverse of the network with fast and efficient computation of the Jacobian determinant \citep{song2019mintnet}. An invertible neural network can be used for generation and classification with more interpretability. \citep{kingma2018glow} proposed an invertible $1 \times 1$ convolution building on top of NICE \citep{dinh2014nice}  and RealNVP \citep{dinh2016density} consisting a series of flow step combined in a multi-scale architecture. Each flow step consists of actnorm followed by an invertible $1 \times 1$ convolution, followed by a coupling layer (see Sec \ref{section:fastflowunit}). Emerging \citep{hoogeboom2019emerging} presented method to generalized $1 \times 1$ convolution to invertible $ k \times k$ convolutions. Emerging chains two specific auto regressive convolutions \citep{kingma2013auto} to form a single convolutional layer following the associativity of the convolution operation. Each of these autoregressive convolutions is chosen such that the resulting convolution matrix $\mathbf{M}$ is triangular with an inverse time of each of the convolutions is O$(n  \times n \times k^2)$. MintNet \citep{song2019mintnet} presented a method for designing invertible neural networks by combining building blocks with a set of composition rules. The inversion of the proposed blocks necessitates a sequence of dependent computations that increase the network's sampling time. SNF \citep{keller2021self} proposed a method to reduce the computation complexity of the Jacobian determinant by replacing the gradient term with a learned approximate inverse for each layer. This method avoids the determinant of Jacobian and makes it approximate, and requires an additional backward pass for inversion of convolution. MaCow \citep{ma2019macow} while many other papers make use of the invertibility of triangular matrix to reduce inversion time, MaCow outperforms all of them by performing the inverse in O$(n k^2)$ by carefully masking $4$ kernels at the top, left, bottom, right to achieve a full convolution, but this flow model use four autoregressive convolutions to make an effective standard convolution. Woodbury \citep{lu2020woodbury} this paper employs the \emph{Woodbury transformation} for invertible convolution, which is a generalized permutation layer that models dimension dependencies along the channel and spatial axes using the channel and spatial transformation. ButterflyFlow \citep{meng2022butterflyflow} introduced a new family of an invertible layer that works for special underlying structures and needs a sequence of layers for an effective invertible convolution.

\paragraph{Fast Algorithms for Invertible Convolutions.}
CInC Flow \citep{nagar2021cinc}, derive necessary and sufficient conditions on a padded CNN for it to be invertible and require a single CNN layer for every effective invertible CNN layer. The padded CNN can leverage the advantage of parallel computation for inversion, resulting in faster and more efficient computation of Jacobian determinants.

The distinguishing feature of our invertible convolutions as compared to previous works is that we have a parallel inversion algorithm that does only $(2n-1)k^2$ operations where $n$ is input size and $k$ is kernel size. MaCow is the closest approach that takes twice the number of operations. Some of the approaches, like MintNet and SNF, do achieve a lesser number of operations. However, they are not proper normalizing flows as they compute only an approximate inverse. We use the convolution design from CInC Flow but give a parallel inversion algorithm for it. Furthermore, our FInC Flow \emph{Unit} is designed to efficiently parallelize the operations by splitting the convolution operations channel-wise. In Table \ref{table:complexity}, we compare our proposed flow model with the existing model in terms of the receptive fields/number of learnable parameters, complexity of computing the inverse of convolution layer for sampling.





\section{Preliminaries}\label{sec:FastFlow}
\paragraph{Normalizing Flows.}
Formally, Normalizing Flows is a series of transformations of a known simple probability density into a much more complex probability density  using invertible and differentiable functions. These invertible function allows to write the probability of the output as a differentiable function of the model parameters. As a result, the models can be trained using backpropagation with the negative log likelihood loss function.

Let $\mathbf{X} \in \mathbb{R}^d$ be a random variable with tractable density $p_\mathbf{X}$. Let $f:\mathbb{R}^d \rightarrow \mathbb{R}^{d}$ be a differentiable and invertible function. If $\mathbf{Y} = f(\mathbf{X})$ then the density of $Y$ can be calculated as 
$$ p_X(x) = p_Y(y) \left| \det J_f  \right| \qquad \text{ where } \qquad J_f =  \frac{\partial f(x)}{\partial x}.$$ 

Note that $J_f$ is a $d \times d$ matrix called the Jacobian. If $X$ is transformed using a sequence of functions $f_i$'s. That is
$f = f_1 \circ f_2 \circ f_3 \circ \cdots \circ f_r$. Now probability density, $p_Y(y)$ can be expressed as 
\begin{equation}
    p_Y(y) = p_X(f^{-1}(y)) \cdot \displaystyle \prod_{i = r}^{1}  |J_{f_i^{-1}}(y_i)| .
\end{equation}
where $ y_i = f_i^{-1}\circ \cdots \circ f_r^{-1}(x)$.
The log-probability of $p_Y$ which will be used to model the complex image distribution is given by,
\begin{equation}
\label{eq:nll}
    \log p_Y(y) = \log p_X(f^{-1}(y_r)) + \displaystyle \sum_{i=1}^{r}  \log |\det 
    J_{f^{-1}_i}(y_i)|.
\end{equation}
The functions $f_i^{-1}$ will be given by neural network layers and the above function can be computed during the forward pass of the neural network. The negative of this function called the negative log likelihood (NLL) is minimized when images in the dataset are given highest probabilities. Hence it gives a simple, interpretable loss function for training the model. 

\begin{figure*}[!ht]
    \centering
    \includegraphics[width=\textwidth]{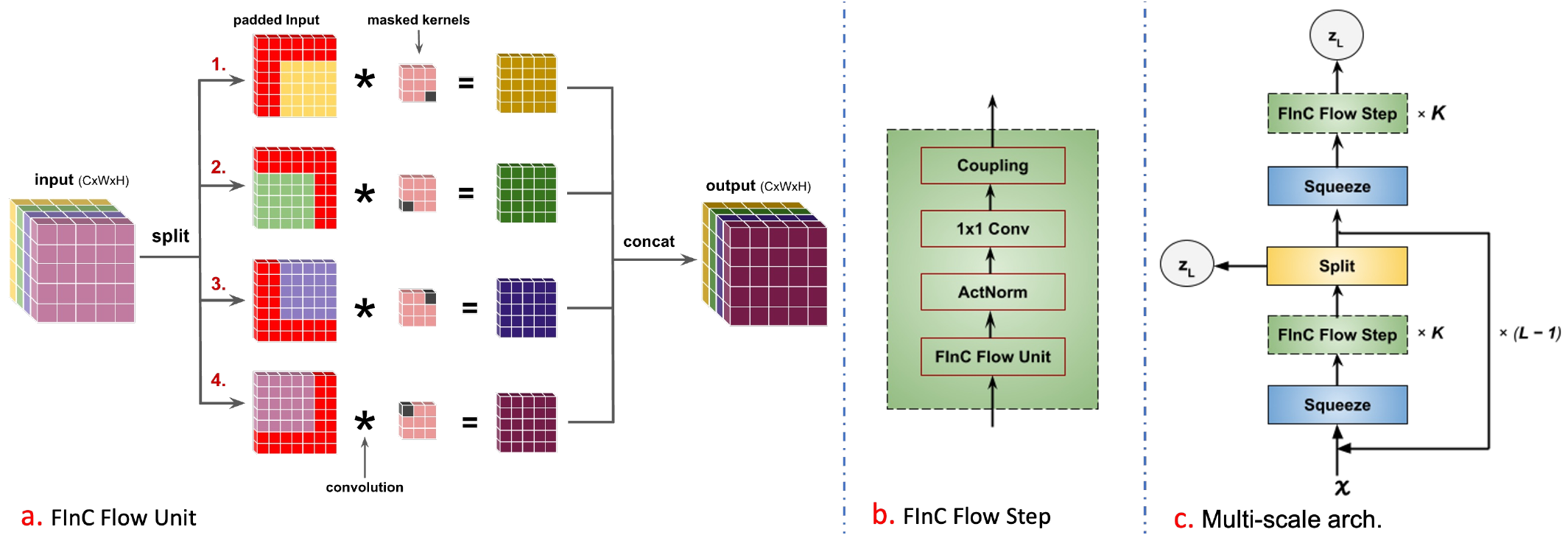}
    \caption{(a) FInC Flow unit: to utilize the independence of convolution on channels the input channels are sliced into four equal parts and then padded (1. top-left, 2. top-right, 3. bottom-right, 4. bottom-left) to keep the input size and output size same. Next, parallelly convoluted each sliced channel with the corresponding masked kernel (masked corner of kernels: 1. bottom-right, 2. bottom-left, 3. top-left, 4. top-right). Finally, concatenate the output from each convolution. (b) We propose a FInC Flow architecture  \ref{section:fastflowunit} where each FInC Flow \emph{Step}  consists of an actnorm step, followed by an invertible 1 × 1 convolution, followed by coupling layer. (c) Flow is combined with a multi-scale architecture \ref{subsec:arch}.}
    \label{fastflow}
\end{figure*}

\paragraph{Invertible Convolutions.}

The convolution of an input $X$ with shape $H \times W \times C$ with a kernel $K$ with shape $k \times k \times C \times C$ is $Y = X \times K$ of shape $(H - (k+1)) \times (W -  (k+1)) \times C$ which is equal to
\begin{equation}
Y_{i,j,c_o} = \sum_{l,h < k} \sum_{c_i=1}^{C} X_{i+l,j+k,c_i}K_{l,k,c_i,c_o} 
\end{equation}
Notice that the dimensions of $X$ and $Y$ are not necessarily the same. To ensure that the $X$ and $Y$ are the same size, we apply padding to the input $X$. For an input image $X$ with shape $H \times W \times C$, the $(t,b,l,r)$ padding of $X$ is the image $\hat{X}$ of shape $(H+t+b) \times (W+l+r) \times C$ is defined as
\begin{equation}
\hat{X}_{i, j, c} = 
    \begin{cases}
        X_{i-t, j-l, c} & \text{if } i - t < H \text{ and } j - l < W \\
        0 & \text{otherwise}
    \end{cases}
\end{equation}
The convolution operation is a linear transformation of the input. For the vectored (flattened) input $X$, denoted by $x$, the output vector $y$ can be written as  $y = \mathbf{M} x$. Matrix $\mathbf{M}$ is the called as the \emph{Convolution Matrix} and the dimensions of this matrix is $HWC \times HWC$. As long as matrix $\mathbf{M}$ is invertible, the convolutional layer can be included as a part of the Normalizing Flows. The common approach to building invertible convolutions is by making $M$ upper triangular and ensuring invertibility by making diagonal entries to be nonzero.

\paragraph{Algorithms for Computing Inverse of Convolutions.}
For normalizing flows built using invertible convolutions, the sampling pass will involve computing the inverse of the convolution matrix. This involves solving a linear systems of equations $\textbf{M}x =y$.

For a general square matrix of size $n \times n$, the time complexity for inversion is $O( n^3)$. For a lower triangular matrix of size $n \times n$, the time complexity for inversion is $O(n^2)$ because of back-substitution method. Notice that the size of convolution matrix, \textbf{M}) is $n^2 \times n^2$ (refer Figure \ref{fig:Conv_equn}) and also that row of the matrix has only $k^2$ entries at the maximum, results in an inversion time of $O(n^{2} k^{2})$ which is used in many of the previous works like Emerging and CInC Flows. We show that this method can be parallelized for carefully designed convolutions giving a complexity of only $O(n k^2)$.

\section{FInC Flow: Our Approach}
In this section we describe our approach including convolution layer design which has a fast parallel inversion algorithm with running time O$(n k^2)$. For more clarity, we refer to height of the image as $H$, width as $W$ and channels as $C$ in this section.

\subsection{Convolution Design} \label{sec:conv-design}
As it is obvious from equation $ x = \textbf{M}^{-1} y$, the inverse timings depends on $\mathbf{M}$. Emerging \citep{behrmann2019invertible} masks almost half of the convolution kernel values to ensure $\textbf{M}$ is a Lower Triangular Matrix. However, we follow the method followed in CInC Flow, where only a few values of the convolution kernel are masked.
For an input image $X$ with shape $H \times W \times C$, the top-left(\emph{TL}) i.e., $(t, 0, l, 0)$ padding of $X$ is the image $X^{\text{(TL)}}$ of shape $(H+t) \times (W+l) \times C$ is defined in equation \ref{eq:top_left_padding} and similarly for the top-right (\emph{TR}) as equation \ref{eq:top_right_padding}, bottom-left (\emph{BL}) as equation \ref{eq:bottom_left_padding}, bottom-right (\emph{BR}) as equation \ref{eq:bottom_right_padding}. 
\begin{align}
    X^{\text{(TL)}}_{i, j, c} &= 
    \begin{cases}
        X_{i-t, j-l, c} & i - t > 0 ~\wedge~ 
        j - l > 0 \\
        0 &\text{otherwise}
    \end{cases} \label{eq:top_left_padding} \\
    X^{\text{(TR)}}_{i, j, c} &= 
    \begin{cases}
        X_{i-t, j, c}~~~ &  i - t > 0 ~\wedge~ j - r < W \\
        0 &\text{otherwise}
    \end{cases} \label{eq:top_right_padding} \\
    X^{\text{(BL)}}_{i, j, c} &= 
    \begin{cases}
        X_{i, j-l, c}~~~~ &  i - b < H ~\wedge~ j - l > 0 \\
        0 &\text{otherwise}
    \end{cases} \label{eq:bottom_left_padding}\\
    X^{\text{(BR)}}_{i, j, c} &= 
    \begin{cases}
        X_{i, j, c}~~~~~~~~ &  i - b < H ~\wedge~ j - r < W \\
        0 &\text{otherwise}
    \end{cases}  \label{eq:bottom_right_padding}
\end{align}
Figure \ref{fig:Conv_equn} shows the convolution of a \emph{TL} padded $3 \times 3$ image with a $2 \times 2$ filter is equivalent to a matrix multiplication between convolution matrix $\textbf{M}$ and vectored input $x$.  We leverage this to find the inverse faster. We discuss this in more detail in the subsequent sections. Also padded input $X^{\text{(TR)}}$, $X^{\text{(BL)}}$ and $X^{\text{(BR)}}$ are equivalent to $X^{\text{(TL)}}$ once they are flipped along corresponding dimension(s).

\SetKwComment{Comment}{/* }{ */}

\begin{algorithm}[!t] 
    \caption{Fast Parallel Inversion Algorithm of \emph{TL} padded convolution block(PCB)}
    \label{alg:inv_algo}
    \KwIn{$K$: Kernel of shape $(C, C, k_H, k_W)$\\
    $Y$: output of the conv of shape $(C, H, W)$}
    \KwResult{$X$: inverse of the conv. with shape $(C, H, W)$.}
    \textbf{Initialization:} $X \gets Y$ \;
    \For{$d \gets 0, H + W - 1$}{
        \For {$ c \gets 0, C - 1 $}{
            \Comment{The below lines of code executes parallelly on different threads on GPU for every index $(c, h, w)$ of $X$  on the $d$th diagonal.} 
            
                \For{$ k_h \gets 0, k_H - 1 $}{
                \For{$ k_w \gets 0, k_W - 1 $}{
                \For{$ k_c \gets 0, C - 1 $}{
                    \If{pixel $ (k_c, h - k_h, w - k_w) $ not out of bounds}{
                         $ X[c, h, w]  \gets X[c, h, w] - X[k_c,h - k_h, w - k_w] * K[c, k_c, k_H - k_h - 1, k_W - k_w - 1]; $
                      }
                }
                }
                } \Comment{synchronize all threads}
        } 
     }
\end{algorithm}

\subsection{Parallel Inversion Algorithm}\label{subsec:PIA}
We have presented our algorithm in Algorithm \ref{alg:inv_algo}. The algorithm can be understood using Figure \ref{fig:Conv_equn}.

\begin{definition}[Diagonal Elements]
Two pixels $x_{i, j}$ and $x_{i', j'}$ are said to be secondary diagonal elements if $i + i' = j+j'$ . For brevity, we refer to these elements from here on simply as Diagonal Elements.
\end{definition}

Theorem \ref{th:ind_parallel} proves that every element of on the diagonal can be computed parallelly and Line 2 of the algorithm takes care of that. We initialize $X$ to $Y$ in Line 1 and compute $X$ in Line 8 which is given in Equation \ref{eq:inv_convolution_imp}. 
It is important that we wait for the threads to synchronize before we move to the next diagonal, as they are needed for computing the elements of the next diagonal. The \emph{not out of bounds} in Line 7 means we are remaining in the $k \times k$ convolution window and also we are not including pixel $(i, j)$ while computing $x_{i, j}$ as given in Equation \ref{eq:inv_convolution_imp}

\begin{theorem}
\label{th:ind_parallel}
The inverse of the pixels on the diagonals of a TL padded convolution can be computed independently and parallelly.
\end{theorem}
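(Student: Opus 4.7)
The plan is to analyze the dependency structure of the inversion formula and show that any recovered pixel $x_{i,j}$ depends only on pixels $x_{i',j'}$ with $i' + j' < i + j$. First, I would unfold the TL-padded convolution. Because padding is placed only on the top and left and the kernel has shape $k \times k$, the output at $(i,j)$ is an affine combination of inputs at positions $(i - a, j - b)$ for $0 \le a, b < k$, i.e., within the $k \times k$ window anchored at $(i,j)$ and extending up and to the left. This is precisely the property from Figure~\ref{fig:Conv_equn} that makes the convolution matrix $\mathbf{M}$ lower triangular in raster order.

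Next, I would invoke the CInC Flow design constraint fixing the corner kernel entry $K[k - 1, k - 1, c, c] = 1$, which places ones on the diagonal of $\mathbf{M}$ and lets us solve the triangular system row by row. Rearranging the forward equation, each $x_{i,j}$ equals $y_{i,j}$ minus a linear combination of $x_{i - a, j - b}$ over indices $(a,b) \ne (0,0)$ with $0 \le a, b < k$, with channel indices either folded into each spatial pixel or handled by the sequential per-channel loop in Algorithm~\ref{alg:inv_algo}.

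The crux is then to observe that every input pixel appearing on the right-hand side has diagonal index $(i - a) + (j - b) = (i + j) - (a + b) < i + j$, since $a, b \ge 0$ and $(a, b) \ne (0, 0)$. Hence, once all pixels on diagonals smaller than $d := i + j$ have been recovered, each pixel on diagonal $d$ is a function of already-known values alone, with no cross-dependency among distinct pixels sharing diagonal $d$. A straightforward induction on $d$, with base case $x_{0,0} = y_{0,0}$ for $d = 0$, then yields the parallel inversion: diagonal $d$ is computed in a single parallel step from diagonals $< d$. The one subtlety to track is the multi-channel interaction at a single spatial position, which is dispatched by the CInC Flow convention that the corner $C \times C$ block of the kernel is the identity (so different channels at the same pixel do not depend on each other), making the parallelism claim genuinely about independent spatial positions on an antidiagonal.
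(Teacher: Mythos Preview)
Your proposal is correct and follows essentially the same approach as the paper: unfold the TL-padded convolution so that $x_{i,j}$ is expressed via $y_{i,j}$ and previously recovered pixels in the $k\times k$ top-left window, then argue that same-antidiagonal pixels are mutually independent. The only stylistic difference is that you argue independence directly via the antidiagonal index $(i-a)+(j-b)<i+j$ and induct on $d=i+j$, whereas the paper instead does a case split ($i<i'$, $j>j'$ vs.\ $i>i'$, $j<j'$) on two same-diagonal pixels to conclude neither lies in the other's dependency box; both arguments are equivalent, yours being slightly more compact.
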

\begin{proof}
The $(i, j)^{th}$ pixel value of the output $Y$ with shape $H \times W$ can be calculated as 
$$
y_{i, j} = (\textbf{M}_{iW+j, :})^T \cdot x
$$ 
which means $y_{i,j}$ is the dot product of $\textbf{M}_{iW+j, :}$ i.e., the corresponding row of matrix $\textbf{M}$ and the vectored input $x$.
Because it is a \emph{TL} padded convolution, $y_{i,j}$ depends only on the values of $k \times k$ window of $x_{\le i, \le j}$ pixels where $x_{\le i, \le j}$ are the pixels that are on the top and left side of the pixel $x_{i, j}$ including $x_{i, j}$. Because all the diagonal values are $w_{k, k}$, we have, 
\begin{align*}
y_{i, j} &= w_{k, k} x_{i, j} + f(x_{<i, <j})\\ 
x_{i, j} &= \frac{y_{i, j} - f(x_{<i, <j})}{w_{k, k}}   
\end{align*}

where $x_{<i, <j}$ are the pixels which are strictly top and left side of $(i, j)$. Following the masking pattern of CInC Flow, we have $w_{k, k} = 1$ and $f$ is a linear function which is given by weighted sum of the given pixels weighed by the filter values. So,
\begin{equation}
\label{eq:inv_convolution}
x_{i, j} = y_{i, j} - f(x_{<i, <j})
\end{equation}
\begin{equation}
\label{eq:inv_convolution_imp}
x_{i, j} = y_{i, j} - \sum_{p=0}^{k} \sum_{q=0}^{k} x_{i-p, j-q} K_{p, q}  \text{ where }  p = q \neq 0
\end{equation}

Let two pixels $x_{i, j}$ and $x_{i', j'}$ be  on the same diagonal. This also means that only one of the following settings is true a) $i < i'$ and $j > j'$ or b) $i > i'$ or $j < j'$. Either way, we can conclude that computation of $x_{i, j}$ is not dependent on $x_{i', j'}$ and vice versa following the result in Equation \ref{eq:inv_convolution}. Hence they can be computed independently.
Once $x_{i, j}$ is computed, following the Equation \ref{eq:inv_convolution} and the above result, we can compute $x_{i + 1, j}$ and $x_{i, j+1}$. Since, the sets of pixels $x_{<i+1, <j}$ and $x_{<i, <j+1}$ both include the elements of $x_{<i, j}$ and also $x_{i, j}$, we can write
\begin{align}
x_{i + 1, j} &= y_{i+1, j} - f(x_{<i+1, <j}) \nonumber\\
&=y_{i+1, j} - \alpha x_{i, j} - f_1(x_{<i, <j})\label{eq:parallel_1}\\
x_{i, j+1} &= y_{i, j+1} - f(x_{<i, <j+1}) \nonumber\\
&=y_{i, j+1} - \beta x_{i, j} - f_2(x_{<i, <j}) \label{eq:parallel_2}
\end{align}

where $\alpha$ and $\beta$ are kernel weights.

From Equations \ref{eq:parallel_1}  and \ref{eq:parallel_2}, we can conclude that $x_{i+1, j}$ and $x_{i, j+1}$ which are on the same diagonal can be calculated parallelly in a single step.
\end{proof}

\begin{theorem}
Algorithm \ref{alg:inv_algo} uses only $(H+W-1)k^2$ sequential operations.
\end{theorem}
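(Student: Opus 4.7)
The plan is to account for the running time by separating the work into a sequential phase (one outer iteration per diagonal) and the work done by each parallel thread within one such phase, using Theorem \ref{th:ind_parallel} to justify that the outer loop is the only source of sequentiality.

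First, I would argue that the outer \texttt{for} loop at Line~2 contributes exactly $H+W-1$ sequential steps. This is because an $H \times W$ pixel grid has precisely $H+W-1$ diagonals (indexed by $d = i+j$ for $0 \le i < H$, $0 \le j < W$), and the \emph{TL} padded convolution introduces a dependency from each diagonal to all previous diagonals: from Equation~\ref{eq:inv_convolution_imp} the value $x_{i,j}$ depends on pixels $x_{i-p,j-q}$ with $p+q > 0$, all of which sit on earlier diagonals. Hence this loop cannot be collapsed, and the synchronization barrier at the end of each iteration is necessary.

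Next I would bound the sequential work inside a single diagonal iteration. By Theorem \ref{th:ind_parallel}, every pixel $(c,h,w)$ on diagonal $d$ can be computed independently, so the loop on $c$ and the implicit sweep over the $(h,w)$ pairs on the diagonal are dispatched across threads rather than performed serially. The only sequential work that remains inside a thread is the triple loop over the kernel window at Lines~5--7, which performs one fused multiply--subtract per in-bounds entry of the $k_H \times k_W$ mask. Since $k_H = k_W = k$, the per-thread work is at most $k^2$ operations (channel mixing is absorbed into the constant-factor treatment of $C$, consistent with the counting in Table~\ref{table:complexity}).

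Multiplying the two factors then yields a total of $(H+W-1)k^2$ sequential operations, matching the claim. The main subtlety, and the only step that needs a careful argument, is the dependency analysis justifying that the outer diagonal loop cannot be parallelized further while the inner index $c$ and the diagonal pixels can — and this is exactly what Theorem~\ref{th:ind_parallel} delivers, so the remainder reduces to a straightforward count.
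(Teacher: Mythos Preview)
Your proposal is correct and follows essentially the same approach as the paper: invoke Theorem~\ref{th:ind_parallel} to argue that the outer loop over the $H+W-1$ diagonals is the only sequential phase, then multiply by the at-most-$k^2$ kernel-window operations each thread performs. Your treatment is in fact more careful than the paper's two-sentence argument, since you explicitly justify why the diagonal loop cannot be collapsed and flag that the channel loop is being absorbed into constants.
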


\begin{proof}
We have proved in Theorem \ref{th:ind_parallel} that the inverse pixels on a single diagonal can be computed parallelly in one iteration of Algorithm \ref{alg:inv_algo}. Since there are $H + W - 1$ number of diagonals in a matrix and there are at maximum $k^2$ entries in a row of the convolutional matrix, the number of sequential operations needed will be $(H+W-1)k^2$.
\end{proof}

Thus the running time of our algorithm is O$(n k^2)$ where $n = H = W$ 

\begin{table*}[!t]
\caption{Comparison of the  bits per dimension (BPD), forward pass time (FT) and sampling time (ST) on standard benchmark datasets of various $k \times k$ convolution based Normalizing Flow models. FT and ST are presented in seconds.}
\centering
       \resizebox{\textwidth}{!}{
       
 \begin{tabular}{lcccccccccccc}
        \toprule
        \multicolumn{1}{l}{Model} & \multicolumn{3}{c}{MNIST} & \multicolumn{3}{c}{CIFAR-10} & \multicolumn{3}{c}{Imagenet-32x32} & \multicolumn{3}{c}{Imagenet-64x64}\\
        \cmidrule(lr){2-4} \cmidrule(lr){5-7} \cmidrule(lr){8-10} \cmidrule(lr){11-13}
           &   BPD &  FT & ST &  BPD &  FT & ST & BPD &  FT  &  ST   &   BPD  & FT & ST\\
        \midrule
        Emerging &   -- &  0.16 & 0.62 &  3.34 & 0.49 & 17.19 &  4.09 &    0.73  &    25.79   &    3.81  &  1.71 &  137.04\\
        MaCow   &   -- &  --& -- &  3.16 & 1.49 & 3.23 &  -- &    --  &     --   &    3.69  &  2.91 &  8.05\\
        CInC Flow & -- &  --& -- &  3.35 & 0.42 & 7.91 &  4.03 &   0.62  &     11.97   &    3.85  &  1.57 &  55.71\\
        MintNet & 0.98 &  0.16 & 17.29 &  3.32 & 2.09 & 230.17 &  4.06 &    2.08  &     230.44   &    --  &  -- &  --\\
        FInC Flow (our) &   1.05 &  0.14 & 0.09 &  3.39 & 0.37 & 0.41 &  4.13 &    0.48  &  0.52   &    3.88  &  1.43 & 2.11 \\  
        \bottomrule
        \end{tabular}}%
    
    \label{table:bpd_table}
\end{table*}

\subsection{FInC Flow Unit}\label{section:fastflowunit}

Figure \ref{fastflow}a visualizes our $k \times k$ convolution block. We call this block as FInC Flow \emph{Unit}. We use all the $4$ padding techniques mentioned before to different channels of the image. For this purpose, we split the input into four equal parts along the channel axis. We do \emph{TL} padding to the first part, \emph{TR} to the second part, \emph{BL} to the third part and \emph{TR} to the fourth part. Then we use a masked filter on each of these parts to perform the convolution operation parallelly. We call each of this padded image along with it's corresponding kernel as Padded Convolution Block (PCB).

\subsection{Architecture}\label{subsec:arch}
Figure \ref{fastflow}c shows the complete architecture of our model. Our model architecture resembles the architecture of Glow. The multi-scale architecture involves a block of a Squeeze layer, FInC Flow \emph{Step} repeated $K$ number of times and a Split layer. The whole block is repeated $L - 1$ number of times. A Squeeze layer follows this and finally FInC Flow \emph{Step} repeated $K$ times.
At the end of each split layer, half of the channels are 'split' (taken away) and modeled as Gaussian distribution samples. These splited half channels are \emph{latent vectors}. The same is done for the output channels. These are denoted as $z_L$ in Figure \ref{fastflow}(c). 
Each \emph{FInC Flow Step} consists of a \emph{FInC Flow Unit}, an Actnorm Layer, a $1 \times 1$ Convolutional Layer, followed by a coupling layer. \\
\textbf{Actnorm Layer:} Acts as an activation normalization layer similar to that of a batch normalization layer. Introduced in Glow, this layer performs the affine transformation using scale and bias parameters per channel. 
\\
\textbf{$1 \times 1$ Convolutional Layer:} This layer introduced in Glow does a $1 \times 1$ convolution for a given input. Its log determinant and inverse are very easy to compute. It also improves the effectiveness of coupling layers.
\\
\textbf{Coupling Layer:} RealNVP introduced a layer in which the input is split into two  half. The first half remains unchanged, and the second half is transformed and parameterized by the first half. The output is concatenation of first half and the affine transformation, by functions parameterized by the first, of second half. The inverse and log determinant of coupling layer are computed in a straightforward manner. Coupling layer consists of $3 \times 3$ convolution followed by a $1 \times 1$ and a modified $3 \times 3$ convolution used in Emerging.
\\
\textbf{Squeeze:} This layer takes features from spatial to channel dimension \citep{behrmann2019invertible}, i.e., it reduces the feature dimension by total four, two across the height dimension and two across the width dimension resulting in increases the channel dimension by four. As used by \citep{dinh2016density}, we use squeeze layer to reshape the feature maps to have smaller resolution but more channels.
\\
\textbf{Split:} Input is splited into two halves across the channel dimension. We retain the first half, and a function parameterized by first half transform the second half. The transformed second half is modeled as Gaussian samples, are the \emph{latent vectors}. We do not use the checkerboard pattern used in RealNVP \citep{dinh2016density} and many others to keep the architecture simple.

\SetKwComment{Comment}{/* }{ */}

\begin{table*}[!t]
    \centering
    \caption{CIFAR-10: comparison of learnable parameters and the sampling time. FInC Flow has less number of learnable parameters with the same receptive field and fast layers (all the times are averaged over ten loops for n = 100 sample images in seconds). ST = Sample time, FT = Forward Time. MaCow-FG is the fine-grained MaCow model and MaCow-org stands for MaCow model utilizes the original multi-scale architecture which is the same as Glow. MaCow and our method is closely similar in term of the convolutional design. So, here we show that our proposed method do fast sampling while maintaining the faster forward time .}
        \begin{tabular}{l  c  c  c c}
        \toprule
            Models    & Setting (K and L) & Learnable params (M = million)  & FT(n=100) & ST(n=100) \\ 
            \midrule
            MaCow-FG & [4, [12, 12], [12, 12], 12]  &  37.19M   & 0.88  & 2.64  \\
            MaCow-org &  [4, [12, 12], [12, 12], 12], [4, 4]  & 38.4M & 1.48  & 3.23 \\ 
            FInC Flow (our) &  [28, 28, 28] & 39.46M  & \textbf{0.37}  & \textbf{0.41}   \\
            \bottomrule
        \end{tabular}
    
    \label{table:params}
\end{table*}
\vspace{2em}

\section{Results}\label{sec:results}
\paragraph{Bits Per Dimension (BPD):}
BPD is closely related to NLLLoss given in equation \ref{eq:nll}. BPD of $H \times W \times C$ image is given by
\begin{equation}
\label{eq:bpd}
    \text{bpd} = \frac{\text{NLLLoss} \times\log_2{e}}{H W C} 
\end{equation}
Table \ref{table:bpd_table} shows the BPD comparative results of various models with our model. We present the results of MaCow-var which uses Variational Dequantization which was introduced in Flow++ \citep{ho2019flow++}. BPDs recorded are the reported numbers from the respective model papers.

\paragraph{Sampling Time:}\label{section:SamplingTimes}
\begin{figure*}[!ht]
    \centering
    \includegraphics[width=0.99\linewidth]{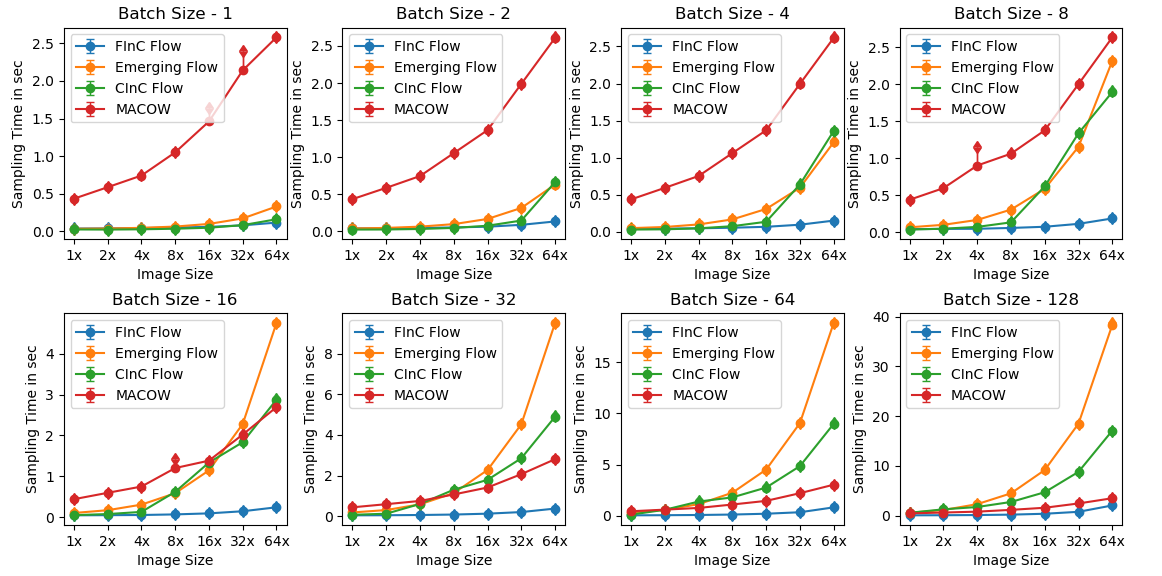}
    \caption{Sampling Times for four models - our, Emerging, CInC Flow,  MaCow. Each plot gives the $95\%$ Confidence Interval (CI) time of the ten runs to sample $100$ images. X-axis represents the sizes of the image sampled starting from $16 \times 16 \times 2$ ($H \times W \times C$) all the way to $128 \times 128 \times 2$. }
    \label{fig:timing}
\end{figure*}


\begin{figure*}[!ht]
    \centering
    \includegraphics[width=0.99\linewidth]{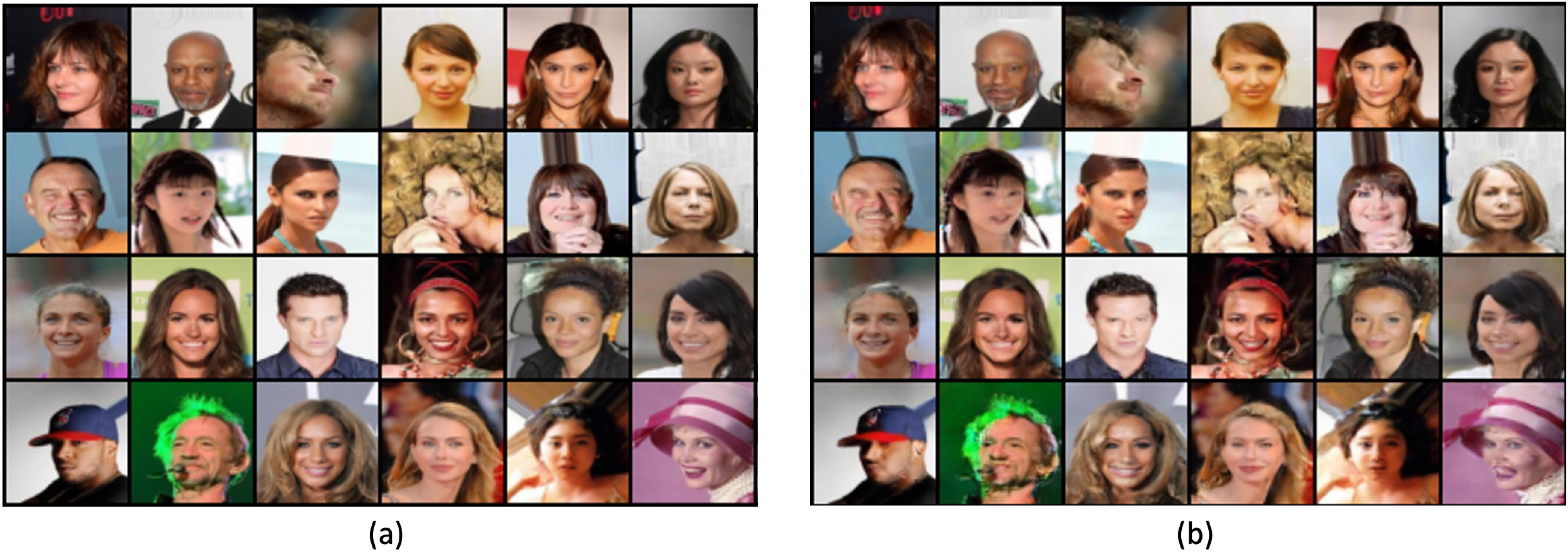}
    \caption{Comparison of  (a) original and (b) reconstructed image samples for the $64 \times 64$ CelebA dataset after FInC Flow model for $100$ epochs. From the images, we can conclude our model reconstruct original image.}
    \label{fig:celeba_reconstruct}
\end{figure*}
Table \ref{table:bpd_table} shows the comparative results of our model with other models. For MaCow, we use the official code released by the authors. We use the code for Emerging, which was implemented in PyTorch by the authors of SNF.. We have implemented CInC Flow in PyTorch and used it to generate results.
The FTs and STs are recorded by averaging ten runs on untrained models (including our model).

\begin{figure}[!ht]
    \centering
    \includegraphics[width=0.99\linewidth]{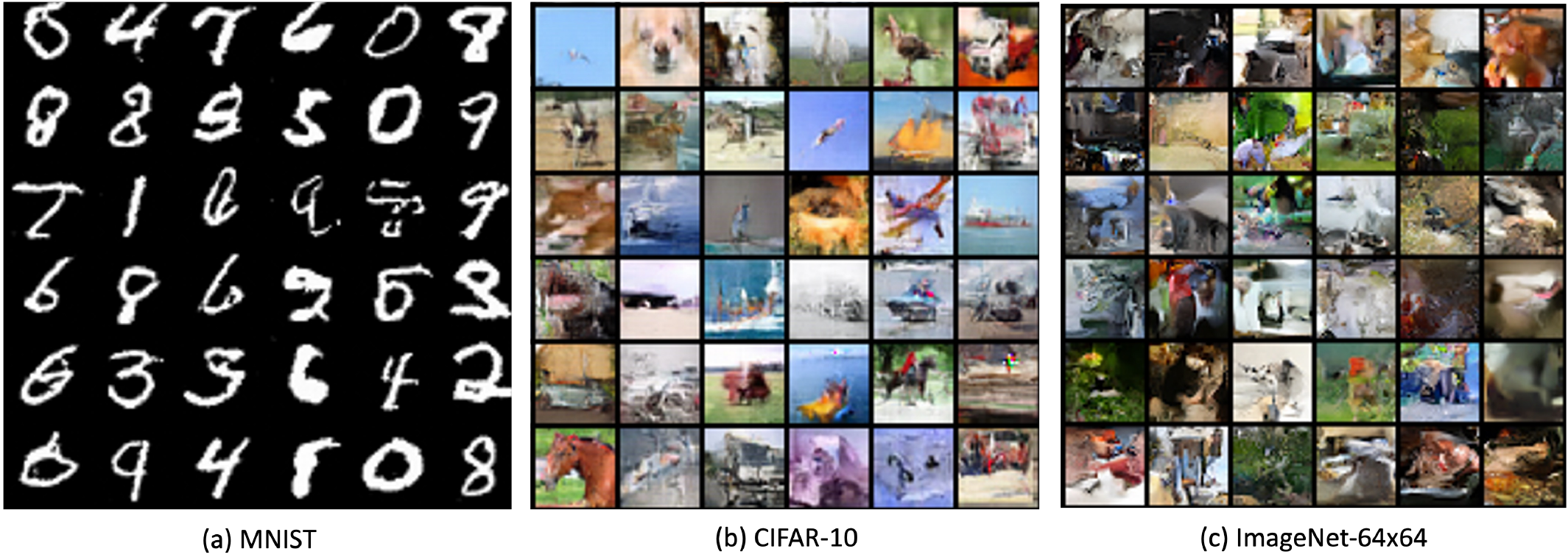}
    \caption{Uncurated generated samples images from our flow model.}
    \label{fig:uncurated_samples}
\end{figure}

In Figure \ref{fig:timing}, we plot the relationship between the input image size and inverse sampling time. As the input image size increase, our \emph{Parallel Inversion Algorithm} improve by utilizing the independence in the convolution matrix $M$. If we input a single image (batch size = $1$), our model performs similarly to the CInC Flow and Emerging. MaCow is far slower because it does the masking of four kernels to maintain the receptive field. To do one convolution, it needs four convolutions to complete one standard convolution, making it slower. Emerging requires two consecutive autoregressive convolutions to have the same receptive field as standard convolution and solver compared to FInC Flow. For batch size = $4$ and larger, FInC Flow beats the Emerging, MaCow , and CInC Flow by a big difference (see Figure \ref{fig:timing}) while maintaining the same receptive field.

\paragraph{Scaling sampling time with spatial dimensions:}
Table \ref{table:params} shows the comparison among the invertible convolution-based models. To keep it fair, we restrict the total parameters across all the models to be close to $5$ M. We note down the average sampling time (ST) to generate $100$ images over ten runs while doubling the size of the sampled image from $16 \times 16$ all the way to $128 \times 128$ and also doubling our batch size from $1$ all the way to $128$. Our model outperforms all the other models in most, if not all, the settings. All the models were untrained and run on a single NVIDIA GTX 1080Ti GPU.
\paragraph{Image reconstruction and generation:}
In Figure \ref{fig:celeba_reconstruct}, we present the effectiveness of the  FInC Flow model in the reconstruction (sampling) of the images. First, we feed the input image to forward flow and get the \emph{latent vector} ($z_L$). To reconstruct the images from the \emph{latent vector} ($z_L$), give the $z$ as input to the inverse flow. Figure \ref{fig:celeba_reconstruct} present the reconstructed face images for the CelebA dataset after training our model for $100$ epochs. The model takes a random sample from the Gaussian distribution for the \emph{latent vector} to generate sample images. This \emph{latent vector} is used to generate images by going backward in the flow model. In Figure \ref{fig:uncurated_samples}, we present generated sample by our model on the MNIST, CIFAR-10, and ImageNet-64x64 dataset.

\section{Conclusion}

With a parallel inversion approach, we present a $k \times k$ invertible convolution for Normalizing flow models. We utilize it to develop a model with highly efficient sampling pass, normalizing flow architecture. We implement our parallel algorithm on GPU and presented benchmarking results, which show a significant enhancement in forward and sampling speeds when compared to alternative methods for $k \times k$ invertible convolution.



\bibliography{Example}
\bibliographystyle{Example}

\section*{Additional details}
Code to our implementation is available here:   \url{https://github.com/aditya-v-kallappa/FInCFlow}
\noindent
\paragraph{Datasets}
We train our model on standard benchmark datasets MNIST \citep{deng2012mnist}, CIFAR-10 \citep{krizhevsky2009learning}, and ImageNet \citep{russakovsky2015imagenet} sampled down to $32 \times 32$ and $64 \times 64$. We also train our model on CelebA \citep{liu2015deep} sampled down to $64 \times 64$. Figure \ref{fig:celeba_reconstruct}a present the CelebA-$64 \times 64$ reconstructed samples and Figure \ref{fig:uncurated_samples}b for the CIFAR-10 and ImageNet-64x64 generated samples.

\paragraph{Hyperparameters}
To train our model, we use Adam optimizer \citep{kingma2014adam} with learning rate of 0.001 with an exponential decay of 0.99997 per epoch. For training on Imagenet, we also make sure that the gradients stay between $-1$ and $+1$ by clipping them. 

\paragraph{Masking}
To make sure the masked values in the Padded Conv Blocks, we ensure they are not affected by back propagation. To achieve this, we reset the gradients of the masked values to zero after every training iteration.

\paragraph{Cuda code details}
To run CUDA code, we use PyTorch-LTS 1.8.2 and cudatoolkit 10.2. While we can implement Algorithm \ref{alg:finc_flow_inverse} to find inverse of each padded block individually, we can take advantage of the fact that all 4 Padded Conv Blocks are equivalent after proper flipping of padded inputs and kernels. This is done by using Algorithm \ref{alg:finc_flow_inverse} on GPU.

\noindent
First we split $Y$ into 4 parts across channel dimension following the architecture shown in 2. This is given in Line 1. Then we flip $Y$ and kernels to match TL-padding which is given in Line 2. Then we concatenate them to get the final $Y$ and $K$ respectively which are given in Lines 3 and 4. Then we apply Algorithm \ref{alg:finc_flow_inverse} to find the inverse. We do the reverse process of the above to get the correct $X$. The steps are given in Lines 5, 6, 7, 8.

\noindent
We fix the number of threads of each grid of the GPU to be 1024, the number of $grids_x$ to be the batch size and $grids_y$ to be 4 which is the number of Padded Conv Blocks in a single FInC Flow Unit. This ensures that not only GPU inverts the whole FInC Flow Unit at once but also on a batch of images. 

\begin{algorithm}[!h]
    \caption{Fast Parallel Inversion Algorithm for FInC Flow \emph{Unit}}
    \label{alg:finc_flow_inverse}
    \KwIn{$K_1, K_2, K_3, K_4$ - Convolution Kernels of different PCB, $Y$ -  Output of the FInC Flow \emph{Unit}}
    \KwResult{$X$ -  Input to the FInC Flow Unit / Inverse of the FInC Flow Unit}
    \begin{enumerate}
        \item{$Y_1, Y_2, Y_3, Y_4 \gets split(Y)$ }
        \item{Flip $Y_2, Y_3, Y_4, K_2, K_3, K_4$ (inplace) appropriately to match \emph{TL} padding}
        \item{$X \gets concat(Y_1, Y_2, Y_3, Y_4)$}
        \item{$K \gets concat(K_1, K_2, K_3, K_4)$}
        \item{Apply Algorithm \ref{alg:inv_algo} with input $K, Y$  to get $X$}
        \item{$X_1, X_2, X_3, X_4 \gets split(X)$}
        \item{Flip $X_2, X_3, X_4$ appropriately to get the correct output }
        \item{$X \gets concat(X_1, X_2, X_3, X_4$)}
    \end{enumerate}
\end{algorithm}

\paragraph{Running MaCow, Emerging, CInC Flow, SNF}
For MaCow and SNF, we use the official code released by the authors. Emerging was implemented in PyTorch by the authors of SNF. We make use of that. We have implemented CInC Flow on PyTorch to get the results.

\paragraph{Computing Run-time and Confidence Intervals}
We run the model (both forward and sampling) $11$ times and ignore the $1^{\text{st}}$ run as it includes the initialization time. We calculate the mean, standard deviation and $95\%$ confidence interval and plot the numbers. 
\\
To calculate forward time, we pass $100$ images, as for sampling times, we sample $100$ images. We present these numbers in Table-\ref{table:params}.
\\
For sampling time comparison of different models shown in Figure \ref{fig:timing}, we set the total number of parameters for all the models to be close to $5\mathbf{M}$ to make it a fair comparison. 

\paragraph{Hardware/Training Time}
Our hardware setup consists of Intel Xeon E5-2640 v4 processor providing 40 cores, 80 GB of DDR4 RAM, 4 Nvidia GeForce GTX 1080 Ti GPUs each with $12$ GB of VRAM. We train our model on all GPUs using PyTorch's Data Parallel class. We implement early stopping mechanism for smaller datasets like MNIST, CIFAR-10. For others we train the model for a maximum epochs. To evaluate Forward Time and Sampling Time, we use only one of the GPUs.We evaluate our FInC Flow model on MNIST, CIFAR-10, Imagenet-32x32, and Imagenet-64x64 datasets for three metrics - (a) Loss expressed in Bits per Dimension (BPD), (b) Forward Pass Time (FT): Time taken for 100 images to be passed through the model and (c) Sampling Time(ST): Time taken by the model to generate 100 images.  To do this, we train our model with Adam Optimizer with a learning rate ($lr$) of $0.001$ and exponentially reduce the $lr$ by $0.99997$ after each epoch. We have used 4 NVIDIA GTX 1080 Ti GPUs to train our model. Evaluation(FT/ST) is done on a single GPU.



\end{document}